\documentclass{article}
\usepackage{spconf,amsmath,amssymb,graphicx}
\usepackage{mathtools}
\usepackage{booktabs,makecell}
\usepackage{pifont}
\usepackage{algorithm,algpseudocode}
\usepackage{amsthm}
\usepackage[hidelinks]{hyperref}

\newcommand{\yes}{\ding{51}}
\newcommand{\no}{\ding{55}}

\newcommand{\nrnodes}{n}
\newcommand{\nodeidx}{i}
\newcommand{\clusteridx}{c}
\newcommand{\sampleidx}{r}
\newcommand{\nrcluster}{k}
\newcommand{\graph}{\mathcal{G}}
\newcommand{\nodes}{\mathcal{V}}
\newcommand{\edges}{\mathcal{E}}
\newcommand{\neighbourhood}[1]{\mathcal{N}^{(#1)}}
\newcommand{\weights}{\boldsymbol{\theta}}
\newcommand{\featurevec}{\mathbf{x}}
\newcommand{\meanvec}[1]{\boldsymbol{\mu}^{(#1)}}
\newcommand{\covmtx}[1]{\boldsymbol{\Sigma}^{(#1)}}
\newcommand{\localdataset}[1]{\mathcal{D}^{(#1)}}
\newcommand{\localsamplesize}[1]{m_{#1}}
\newcommand{\locallossfunc}[2]{L_{#1}(#2)}
\newcommand{\regparam}{\alpha}
\newcommand{\simplex}[1]{\Delta^{#1}}
\newcommand{\mvnormal}[2]{\mathcal{N}(\featurevec;\,#1,\,#2)}
\newcommand{\defeq}{\coloneqq}
\newcommand{\lrate}{\eta}
\newcommand{\featuredim}{d}
\newcommand{\err}{\boldsymbol{\mu}_{\mathrm{err}}}
\newcommand{\vy}{\mathbf{y}}
\newcommand{\kld}[2]{D^{(\rm KL)}\big( #1,#2 \big)}
\newcommand{\kldapprox}[2]{\widehat{D}^{(\rm KL)}\big( #1,#2 \big)}

\newtheorem{assumption}{Assumption}
\newtheorem{proposition}{Proposition}

\title{Federated Soft Clustering via Generalized Total Variation Minimization}

\name{Shamsiiat Abdurakhmanova, Alexander Jung\thanks{Funded by the
Research Council of Finland (Decision \#363624), the Jane and Aatos Erkko
Foundation (Decision \#A835), and Business Finland. The authors thank
Ekkehard Schnoor and Salvatore Rastelli for their careful reading of
the manuscript.}}
\address{Department of Computer Science, Aalto University, Espoo, Finland}

\begin{document}
\ninept
\maketitle

\begin{abstract}
We study federated soft clustering over federated learning (FL) networks of devices 
that each hold a private local dataset and fit a personalized Gaussian mixture model (GMM).
Generalized total variation minimization (GTVMin) couples the local maximum
likelihood problems through a graph regularizer that penalizes a discrepancy
between the models of connected nodes. The choice of discrepancy measure is a
key design decision: we compare a squared Euclidean distance between model
parameters, which requires component matching, with two measures that
compare the local model distributions directly and hence need no
matching: a Monte-Carlo approximated Kullback--Leibler (KL)
divergence and a closed-form maximum mean discrepancy (MMD). 
All three resulting GTVMin instances are optimized by synchronous projected
gradient updates; for the smooth MMD instance we provide a convergence
guarantee to stationary points. We characterize their computational cost and evaluate their robustness to data heterogeneity.
\end{abstract}

\begin{keywords}
federated learning, soft clustering, Gaussian mixture model, total
variation, distributed optimization
\end{keywords}

\section{Introduction}
\label{sec:intro}

Clustering groups data points into groups, or clusters, of mutually
similar points~\cite{bishop2006pattern}; soft clustering assigns each point
a degree of belonging to several clusters, often through a Gaussian
mixture model (GMM), with each component representing one
cluster~\cite{bishop2006pattern,dempster1977maximum}. This paper studies
\emph{federated} soft clustering: nodes hold private local datasets, each
often too small to fit a reliable model on its own (the scarce-data regime
of Sec.~\ref{sec:experiments}), and collaborate by exchanging only model
parameters with their neighbors in an FL network.

Our approach instantiates generalized total variation minimization
(GTVMin), a design principle for federated learning (FL) developed in prior
work~\cite{ClusteredFLTVMinTSP,JungFLBook,yang2026federatedkmeansnetworks}:
the sum of the local losses is augmented with a generalized total
variation (GTV) regularizer penalizing the discrepancy between the models
of connected nodes, balancing local fit against neighbor agreement.

Our \textbf{contributions}: (i) three GTVMin instantiations for federated
soft clustering using discrepancy measures for GMMs with complementary
trade-offs (Sec.~\ref{sec:discrepancy}); (ii) a projected-gradient
algorithm with a stationarity guarantee for the MMD instance
(Sec.~\ref{sec:algorithm}, Proposition~\ref{prop:convergence}); (iii)
synthetic and handwritten-digits experiments analyzing the robustness to data heterogeneity
(Sec.~\ref{sec:experiments}).

\section{Problem Setup}
\label{sec:setup}

For a positive integer $\nrnodes$, let $[\nrnodes]\defeq\{1,\ldots,\nrnodes\}$.
Node $\nodeidx\in[\nrnodes]$ stores a local dataset
$\localdataset{\nodeidx}=\{\featurevec^{(\nodeidx,1)},\ldots,
\featurevec^{(\nodeidx,\localsamplesize{\nodeidx})}\}\subseteq\mathbb{R}^{\featuredim}$.
Nodes communicate over an undirected, unweighted graph
$\graph=(\nodes,\edges)$ with $\nodes=[\nrnodes]$, edge set
$\edges$, and neighborhoods
$\neighbourhood{\nodeidx}=\{\nodeidx':\{\nodeidx,\nodeidx'\}\in\edges\}$.
Edges act as a proxy for statistical similarity of local datasets; when
the FL network merely mirrors physical connectivity,
edges can connect dissimilar nodes --- a setting that
Sec.~\ref{sec:experiments} probes with a stochastic block model (SBM).

Each node maintains a personalized $\nrcluster$-component GMM with parameters
$\weights^{(\nodeidx)}=\{(\pi^{(\nodeidx,\clusteridx)},
\meanvec{\nodeidx,\clusteridx},\covmtx{\nodeidx,\clusteridx})\}_{\clusteridx=1}^{\nrcluster}$,
where $\pi^{(\nodeidx)}\in\simplex{\nrcluster}$, the probability
simplex in $\mathbb{R}^{\nrcluster}$, and local density
$p(\featurevec;\weights^{(\nodeidx)})=\sum_{\clusteridx=1}^{\nrcluster}
\pi^{(\nodeidx,\clusteridx)}
\mvnormal{\meanvec{\nodeidx,\clusteridx}}{\covmtx{\nodeidx,\clusteridx}}$.
We identify $\weights^{(\nodeidx)}$ with the vector in
$\mathbb{R}^{\nrcluster(1+\featuredim+\featuredim^{2})}$ that stacks
$\pi^{(\nodeidx,\clusteridx)}$, $\meanvec{\nodeidx,\clusteridx}$, and
$\mathrm{vec}\big(\covmtx{\nodeidx,\clusteridx}\big)$ for
$\clusteridx\in[\nrcluster]$; throughout, $\|\cdot\|_2$ denotes the
Euclidean norm of such stacked vectors (i.e., the Frobenius norm on
covariance blocks).
To exclude degenerate mixtures (vanishing weights, singular
covariances), we restrict the parameters to a compact convex set
$\mathcal{W}$, which also enables the convergence analysis of
Sec.~\ref{sec:algorithm}.

\begin{assumption}[Bounded parameter domain]\label{assn:bounded}
All GMM parameters lie in a compact convex set $\mathcal{W}$: there are
constants $\pi_{\min}>0$, $B_\mu<\infty$,
$0<\sigma_{\min}\le\sigma_{\max}<\infty$ with
$\pi^{(\nodeidx,\clusteridx)}\ge\pi_{\min}$,
$\|\meanvec{\nodeidx,\clusteridx}\|_2\le B_\mu$, and
$\sigma_{\min}^2 I\preccurlyeq\covmtx{\nodeidx,\clusteridx}\preccurlyeq\sigma_{\max}^2 I$
for all $\nodeidx\in[\nrnodes]$, $\clusteridx\in[\nrcluster]$.
\end{assumption}

We collect the parameters of all nodes in
$\Theta\defeq(\weights^{(1)},\ldots,\weights^{(\nrnodes)})$ and write
$\Theta\in\mathcal{W}$ when $\weights^{(\nodeidx)}\in\mathcal{W}$ for
all $\nodeidx\in[\nrnodes]$.
The local loss is the average negative log-likelihood (NLL)
\begin{equation}\label{eq:local_nll}
  \locallossfunc{\nodeidx}{\weights^{(\nodeidx)}}
  \defeq
  -\frac{1}{\localsamplesize{\nodeidx}}
  \sum_{\sampleidx=1}^{\localsamplesize{\nodeidx}}
  \ln p\big(\featurevec^{(\nodeidx,\sampleidx)};\weights^{(\nodeidx)}\big).
\end{equation}
GTVMin couples the local maximum likelihood problems via
\begin{equation}
\label{eq:gtvmin}
  \min_{\Theta\in\mathcal{W}}
  \underbrace{\sum_{\nodeidx=1}^{\nrnodes}
  \Big[
    \locallossfunc{\nodeidx}{\weights^{(\nodeidx)}}
  + \regparam \!\!\sum_{\nodeidx' \in \neighbourhood{\nodeidx}}\!\!
  D\big(\weights^{(\nodeidx)},\weights^{(\nodeidx')}\big)
  \Big]}_{\eqqcolon F(\Theta)},
\end{equation}
with regularization parameter $\regparam\ge 0$; we refer to
$F(\Theta)$ as the GTVMin objective (it depends on the choice of
$D$). For $\regparam=0$,
\eqref{eq:gtvmin} decouples into independent local GMM fits. For
$\regparam\to\infty$, connected nodes are forced to agree under the chosen
discrepancy $D$, and \eqref{eq:gtvmin} becomes a graph-coupled barycenter
problem~\cite{agueh2011barycenters}.

\paragraph*{Related work}
A single \emph{global} GMM can be fit by distributed variants of
expectation--maximization (EM) that exchange sufficient statistics sequentially~\cite{Nowak2003}, via
gossip~\cite{Kowalczyk2005}, consensus filters~\cite{Gu2008}, or
alternating optimization~\cite{forero2011distributed};
we share their decentralized setting but fit a separate GMM per node.
Split-and-conquer methods fit local GMMs but merge them at a server in
one
round~\cite{liu2014distributed,zhang2022distributed,pettersson2025federated};
our nodes refine their models over repeated neighbor exchanges.
Federated EM variants handle heterogeneity and partial participation but
rely on a central server~\cite{dieuleveut2111federated}; the gradient EM
of~\cite{tian2023towards} fits node-specific mixtures with finite-sample
guarantees, yet requires all node parameters near one common center;
our nodes may form blocks with markedly different GMMs.
In~\cite{managoli2025robust}, each client draws from one component of
a global GMM and robustly estimates only its mean; our nodes learn
full personalized GMMs.
Clustered FL partitions \emph{clients} into a few groups that each share
one model~\cite{Smith2017,marfoq2021mixture,ghosh2021}; we do not
pre-specify such groups.
\cite{wu2023fedgmm}'s unsupervised algorithm (shared components,
personalized weights, closed-form EM) coincides with the GMM
specialization of~\cite{marfoq2021mixture}.
Minimax thresholds for \emph{one-shot} transfer-assisted clustering
were recently established in~\cite{chakraborty2026transfer}.
In contrast to all of these, GTVMin~\eqref{eq:gtvmin} trains a
\emph{personalized} GMM per node, coupled by penalizing discrepancies
across the edges of the FL network.

\section{Discrepancy Measures}
\label{sec:discrepancy}

We consider three choices for $D$ in \eqref{eq:gtvmin}; their properties
are summarized in Table~\ref{tab:gtv_compare}.

\textbf{Euclidean (GTV-Eucl).}
Since GMM components are unordered, we minimize the squared parameter
distance over component permutations,
\begin{equation}\label{eq:eucl_min}
D_{\mathrm{Eucl}}\big(\weights^{(\nodeidx)},\weights^{(\nodeidx')}\big)
\defeq\min_{P\in\mathcal{S}_{\nrcluster}}
\big\|\weights^{(\nodeidx)}-P\weights^{(\nodeidx')}\big\|_2^2,
\end{equation}
where $\mathcal{S}_{\nrcluster}$ is the set of permutations of
$[\nrcluster]$ and $P\in\mathcal{S}_{\nrcluster}$ acts on
$\weights^{(\nodeidx')}$ by permuting the component triples
$(\pi^{(\nodeidx',\clusteridx)},\meanvec{\nodeidx',\clusteridx},\covmtx{\nodeidx',\clusteridx})$.
For each edge, the minimization in \eqref{eq:eucl_min} is a linear
assignment problem, solved exactly by the Hungarian
algorithm~\cite{kuhn1955hungarian}. The stacked-vector distance weights
the blocks $\pi$, $\boldsymbol{\mu}$, $\boldsymbol{\Sigma}$ equally --- adequate in our
low-dimensional regimes; block-weighted variants fit the same
framework. Fixing the
neighbors and the matching, the node-wise barycenter problem
in \eqref{eq:gtvmin} is strictly convex,
with the average of the matched neighbor parameters as unique
minimizer: GTV-Eucl induces consensus averaging. Even with an exact
matching, dissimilar neighbor components make the consensus average a
poor compromise --- the setting our SBM experiments expose.

\textbf{KL divergence (GTV-KL).}
The choice
\begin{equation*}
D_{\mathrm{KL}}\big(\weights^{(\nodeidx)},\weights^{(\nodeidx')}\big)
\defeq\kld{p(\cdot;\weights^{(\nodeidx')})}{p(\cdot;\weights^{(\nodeidx)})}
\end{equation*}
compares \emph{distributions} and is therefore invariant to component
permutations (and to differing component counts). Both orderings of
each edge appear in \eqref{eq:gtvmin}, so the per-edge penalty sums to
the \emph{symmetrized} KL divergence, while each node only ever
samples from its neighbors' models. As the KL
divergence between GMMs is intractable, we use the Monte-Carlo estimate
\begin{equation}\label{eq:kl_mc}
  \kldapprox{p^{(\nodeidx')}}{p^{(\nodeidx)}}
  \defeq
  \frac{1}{\localsamplesize{\mathrm{nbr}}}\!\sum_{\sampleidx=1}^{\localsamplesize{\mathrm{nbr}}}
  \log\frac{p^{(\nodeidx')}(\featurevec^{(\sampleidx)})}{p^{(\nodeidx)}(\featurevec^{(\sampleidx)})},
  \;\;
  \featurevec^{(\sampleidx)}\!\sim\! p^{(\nodeidx')},
\end{equation}
so each node effectively fits its GMM to its local data augmented with
synthetic samples drawn from its neighbors' current models.

\textbf{MMD (GTV-MMD).}
The maximum mean discrepancy compares distributions in a
kernel-induced feature space~\cite{JMLR:v13:gretton12a,dziugaite2015}. With the kernel inner product
$\langle p,q\rangle_k\defeq\mathbb{E}[k(\featurevec,\vy)]$,
$\featurevec\sim p$, $\vy\sim q$,
\begin{equation*}
  \mathrm{MMD}^2_\sigma(p,q)
  =\langle p,p\rangle_k-2\langle p,q\rangle_k+\langle q,q\rangle_k .
\end{equation*}
For Gaussian factors
$\featurevec\sim\mathcal{N}(\boldsymbol{\mu},\boldsymbol{\Sigma})$,
$\vy\sim\mathcal{N}(\boldsymbol{\mu}',\boldsymbol{\Sigma}')$ and the
radial basis function kernel
$k(\featurevec,\vy)=\exp(-\|\featurevec-\vy\|_2^2/2\sigma^2)$ the kernel
expectation has the closed form
\begin{equation}\label{eq:mmd}
  \mathbb{E}[k(\featurevec,\vy)]
  = \sigma^{\featuredim}\,|S|^{-1/2}
  \exp\big(-\tfrac{1}{2}\Delta\mu^{\!\top} S^{-1}\Delta\mu\big),
\end{equation}
with $\Delta\mu=\boldsymbol{\mu}-\boldsymbol{\mu}'$ and
$S=\boldsymbol{\Sigma}+\boldsymbol{\Sigma}'+\sigma^2 I$: the two
component covariances enter only through their sum, widened by the
kernel bandwidth. Eq.~\eqref{eq:mmd} extends to mixtures by summing
\eqref{eq:mmd} over component pairs weighted by the mixture weights.
$D_{\mathrm{MMD}}\defeq\mathrm{MMD}^2_\sigma(p^{(\nodeidx)},p^{(\nodeidx')})$
is thus available in closed form: evaluating it and its gradient requires
no sampling, no component matching, and no equal component counts. Since a fixed bandwidth $\sigma$
yields vanishing gradients between distant components, we use a
multi-scale kernel: the sum of \eqref{eq:mmd} over the bandwidth grid
specified in Sec.~\ref{sec:experiments}.

\begin{table}[t]
	\centering
	\caption{Properties of the three GTV discrepancy measures to compare
	local GMMs. Per-edge cost: leading-order operation count of one
	evaluation of $D$; $\nrcluster$ components, dimension $\featuredim$,
	Monte-Carlo sample size $\localsamplesize{\mathrm{nbr}}$.}
	\label{tab:gtv_compare}
	\footnotesize
	\setlength{\tabcolsep}{3pt}
	\begin{tabular}{@{}lcccc@{}}
		\toprule
		GTV & Per-edge cost & \makecell{Sample-\\free} & \makecell{Matching-\\free} & \makecell{Variable\\components} \\
		\midrule
		Eucl & $\nrcluster^2 \featuredim^2 + \nrcluster^3$ & \yes & \no  & \no  \\
		KL   & $\nrcluster \featuredim^3 + \localsamplesize{\mathrm{nbr}} \nrcluster \featuredim^2$ & \no  & \yes & \yes \\
		MMD  & $\nrcluster^2 \featuredim^3$ & \yes & \yes & \yes \\
		\bottomrule
	\end{tabular}
\end{table}

\section{Algorithm and Convergence}
\label{sec:algorithm}

We solve all three GTVMin instances with the same algorithmic engine:
synchronous projected gradient descent (Algorithm~\ref{alg:gtvmin}). In each communication round, every node
performs $T_{\mathrm{loc}}$ local gradient steps on its instance of
\eqref{eq:gtvmin} with the neighbors' parameters frozen at their last
communicated values (Jacobi-style parallel updates), then broadcasts its
parameters to its neighbors. Each local step is
\begin{equation}\label{eq:pgd_step}
\weights^{(\nodeidx)}\!\leftarrow\!
\Pi_{\mathcal{W}}\Big(\weights^{(\nodeidx)}\!-\lrate\Big[
\nabla\locallossfunc{\nodeidx}{\weights^{(\nodeidx)}}
+\regparam\!\!\sum_{\nodeidx'\in\neighbourhood{\nodeidx}}\!\!
\mathbf{g}^{(\nodeidx,\nodeidx')}\Big]\Big),
\end{equation}
where $\Pi_{\mathcal{W}}$ projects blockwise onto $\mathcal{W}$, which
factorizes over nodes and components: project $\pi^{(\nodeidx)}$ onto
the simplex with floor $\pi_{\min}$, rescale
$\meanvec{\nodeidx,\clusteridx}$ to norm at most $B_\mu$, and clip the
eigenvalues of $\covmtx{\nodeidx,\clusteridx}$ to
$[\sigma_{\min}^2,\sigma_{\max}^2]$ --- each a closed-form operation;
$\mathbf{g}^{(\nodeidx,\nodeidx')}$ is the gradient of
the frozen-neighbor discrepancy: for GTV-Eucl,
$2(\weights^{(\nodeidx)}-P_{\nodeidx,\nodeidx'}\weights^{(\nodeidx',t-1)})$
with the matched permutation $P_{\nodeidx,\nodeidx'}$; for GTV-KL, the
NLL gradient of node $\nodeidx$'s GMM on the
$\localsamplesize{\mathrm{nbr}}$ neighbor samples --- the same form as
$\nabla\locallossfunc{\nodeidx}{\cdot}$, evaluated on synthetic data;
for GTV-MMD, the gradient of
$\langle p,p\rangle_k-2\langle p,p'\rangle_k$, obtained in closed form
by differentiating \eqref{eq:mmd}.

\begin{algorithm}[t]
	\caption{GTVMin federated soft clustering}
	\label{alg:gtvmin}
	\begin{algorithmic}[1]
		\Require graph $\graph=(\nodes,\edges)$; local datasets $\{\localdataset{\nodeidx}\}$;
		components $\nrcluster$; discrepancy $D$ from
		Sec.~\ref{sec:discrepancy} (Eucl~\eqref{eq:eucl_min}; KL~\eqref{eq:kl_mc};
		MMD~\eqref{eq:mmd});
		parameters $\regparam,\lrate$; rounds $T$; local steps $T_{\mathrm{loc}}$
		\State each node $\nodeidx$: $\weights^{(\nodeidx,0)} \gets$ local
		$\nrcluster$-means fit on $\localdataset{\nodeidx}$;
		$\locallossfunc{\nodeidx}{\cdot} \gets$ NLL \eqref{eq:local_nll}
		of $\localdataset{\nodeidx}$
		\For{$t=1$ to $T$}
		\State per-neighbor state from frozen $\weights^{(\nodeidx',t-1)}$:
		Eucl re-matches components (Hungarian on \eqref{eq:eucl_min}); KL draws
		$\localsamplesize{\mathrm{nbr}}$ samples per neighbor for
		\eqref{eq:kl_mc}
		\For{each node $\nodeidx\in\nodes$ in parallel}
		\State $\weights^{(\nodeidx,t)} \gets$ $T_{\mathrm{loc}}$ projected
		gradient steps, from $\weights^{(\nodeidx,t-1)}$, on
		$\locallossfunc{\nodeidx}{\weights^{(\nodeidx)}}
		+\regparam\sum_{\nodeidx'\in\neighbourhood{\nodeidx}}
		D(\weights^{(\nodeidx)},\weights^{(\nodeidx',t-1)})$
		\EndFor
		\State each node sends $\weights^{(\nodeidx,t)}$ to $\neighbourhood{\nodeidx}$
		\EndFor
		\Ensure personalized GMM parameters $\{\weights^{(\nodeidx,T)}\}_{\nodeidx\in\nodes}$
	\end{algorithmic}
\end{algorithm}

We now give a convergence guarantee for GTV-MMD; the closing paragraph
of this section discusses GTV-KL and GTV-Eucl. For GTV-MMD, the
objective $F$ of \eqref{eq:gtvmin} uses $D=D_{\mathrm{MMD}}$:
by the analyticity of \eqref{eq:mmd}, $F$ is a single smooth function
of $\Theta$ --- no sampling and no matching enter its evaluation.

On $\mathcal{W}$ (Assumption~\ref{assn:bounded}),
each NLL term \eqref{eq:local_nll} is twice continuously differentiable
with a bounded Hessian, hence has a Lipschitz gradient
(cf.~\cite[Lem.~1.2.2]{nesterov2014lectures}); the coupling term is
analytic with bounded second derivatives on the compact $\mathcal{W}$.
Thus $F$ is $\beta$-smooth on $\mathcal{W}$ for some $\beta<\infty$.
One round of Algorithm~\ref{alg:gtvmin}
($T_{\mathrm{loc}}=1$, full gradients) is the projected gradient step
$\Theta^{(t+1)}=\Pi_{\mathcal{W}}(\Theta^{(t)}-\lrate\nabla
F(\Theta^{(t)}))$, with gradient mapping
$\mathcal{G}^{(t)}\defeq\tfrac{1}{\lrate}(\Theta^{(t)}-\Theta^{(t+1)})$;
$\mathcal{G}^{(t)}=\mathbf{0}$ exactly when $\Theta^{(t)}$ is a
stationary point of \eqref{eq:gtvmin}.

\begin{proposition}\label{prop:convergence}
Under Assumption~\ref{assn:bounded}, with $T_{\mathrm{loc}}=1$ and step
size $\lrate<1/\beta$, the iterates of Algorithm~\ref{alg:gtvmin} for
GTV-MMD satisfy
\begin{equation*}
  \min_{0\le t<T}\|\mathcal{G}^{(t)}\|_2^2
  \le \frac{1}{T}\sum_{t=0}^{T-1}\|\mathcal{G}^{(t)}\|_2^2
  \le \frac{2\big(F(\Theta^{(0)})-F^{*}\big)}{\lrate\,T},
\end{equation*}
where $F^{*}=\min_{\Theta\in\mathcal{W}}F(\Theta)$.
\end{proposition}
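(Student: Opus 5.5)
We follow the standard descent-lemma argument for projected gradient descent on a $\beta$-smooth function over a closed convex set, using the $\beta$-smoothness of $F$ on $\mathcal{W}$ established just before the statement. With $T_{\mathrm{loc}}=1$ and full gradients, one round of Algorithm~\ref{alg:gtvmin} is exactly $\Theta^{(t+1)}=\Pi_{\mathcal{W}}(\Theta^{(t)}-\lrate\nabla F(\Theta^{(t)}))$. Indeed, the frozen-neighbor gradients $\mathbf{g}^{(\nodeidx,\nodeidx')}$ for GTV-MMD, summed over both orientations of each edge, reproduce $\nabla F$. This works because $D_{\mathrm{MMD}}$ is symmetric, so each edge contributes $2\regparam D$ in $F$. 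We will either note this factor or absorb it into $\regparam$; it does not affect the argument. The projection is onto $\mathcal{W}$ taken as a product of compact convex sets, so it is the Euclidean projection onto a convex set.

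\textbf{Step 1 (sufficient decrease).} By the descent lemma for $\beta$-smooth $F$ (valid on the convex set $\mathcal{W}$, which contains both iterates),
\begin{equation*}
F(\Theta^{(t+1)})\le F(\Theta^{(t)})+\langle\nabla F(\Theta^{(t)}),\Theta^{(t+1)}-\Theta^{(t)}\rangle+\tfrac{\beta}{2}\|\Theta^{(t+1)}-\Theta^{(t)}\|_2^2 .
\end{equation*}

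\textbf{Step 2 (projection inequality).} The variational characterization of $\Pi_{\mathcal{W}}$, applied with the comparison point $\Theta^{(t)}\in\mathcal{W}$, gives
\begin{equation*}
\langle\Theta^{(t)}-\lrate\nabla F(\Theta^{(t)})-\Theta^{(t+1)},\,\Theta^{(t)}-\Theta^{(t+1)}\rangle\le 0 .
\end{equation*}
Hence $\langle\nabla F(\Theta^{(t)}),\Theta^{(t+1)}-\Theta^{(t)}\rangle\le-\tfrac{1}{\lrate}\|\Theta^{(t+1)}-\Theta^{(t)}\|_2^2=-\lrate\|\mathcal{G}^{(t)}\|_2^2$.

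\textbf{Step 3.} Substituting into Step 1 yields
\begin{equation*}
F(\Theta^{(t+1)})\le F(\Theta^{(t)})-\lrate\big(1-\tfrac{\beta\lrate}{2}\big)\|\mathcal{G}^{(t)}\|_2^2 .
\end{equation*}
Since $\lrate<1/\beta$, we have $1-\beta\lrate/2>1/2$. Therefore $F(\Theta^{(t)})-F(\Theta^{(t+1)})\ge\tfrac{\lrate}{2}\|\mathcal{G}^{(t)}\|_2^2$.

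\textbf{Step 4 (telescoping).} Summing over $t=0,\dots,T-1$ and using $F(\Theta^{(T)})\ge F^{*}$ gives $\sum_t\|\mathcal{G}^{(t)}\|_2^2\le 2(F(\Theta^{(0)})-F^{*})/\lrate$. Dividing by $T$ gives the right inequality. The left inequality holds because a minimum never exceeds an average.

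For well-posedness, $F^{*}$ is attained because $F$ is continuous on the compact set $\mathcal{W}$. We also need $\Theta^{(0)}\in\mathcal{W}$. If the $\nrcluster$-means initialization lies outside $\mathcal{W}$, we project it first, or equivalently start the bound at $t=1$.

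\textbf{Main obstacle.} The argument itself is routine; the delicate point is the identification in the first paragraph. Algorithm~\ref{alg:gtvmin} uses the previous round's neighbor parameters, so we must check that with $T_{\mathrm{loc}}=1$ these equal the current iterates. We must also check that the node-wise gradients, in which node $\nodeidx$ differentiates only $\langle p,p\rangle_k-2\langle p,p'\rangle_k$, together form exactly $\nabla F$ up to the edge-symmetry constant. If the constant turns out to be 2 rather than 1, we will rescale $\regparam$ in the gradient step so that the iteration is a genuine projected gradient step on $F$. The smoothness constant $\beta$ is taken as given from the preceding discussion.
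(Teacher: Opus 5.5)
Your proposal is correct and follows the same route as the paper's sketch. Both use the descent lemma for the $\beta$-smooth $F$ and the projection's variational inequality at the comparison point $\Theta^{(t)}$, giving $\langle\nabla F(\Theta^{(t)}),\Theta^{(t+1)}-\Theta^{(t)}\rangle\le-\lrate\|\mathcal{G}^{(t)}\|_2^2$, then the resulting $\tfrac{\lrate}{2}\|\mathcal{G}^{(t)}\|_2^2$ decrease for $\lrate<1/\beta$ and telescoping against $F^{*}$. Your side remarks are also sound and go beyond the paper, which asserts the identification with projected gradient descent without comment. Because $D_{\mathrm{MMD}}$ is symmetric, $\nabla_{\weights^{(\nodeidx)}}F$ carries $2\regparam$ where the update \eqref{eq:pgd_step} uses $\regparam$, so the iteration is exact projected gradient descent on $F$ with $\regparam/2$; in addition, $\Theta^{(0)}$ must first be placed in $\mathcal{W}$.
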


\begin{proof}[Proof sketch]
The descent lemma~\cite[Lem.~1.2.3]{nesterov2014lectures} for the
$\beta$-smooth $F$, combined
with the variational inequality of the
projection~\cite[Prop.~2.1.3]{bertsekas1999nonlinear},
gives, whenever $\lrate<1/\beta$,
\begin{equation*}
F(\Theta^{(t+1)})\le F(\Theta^{(t)})
-\tfrac{\lrate}{2}\|\mathcal{G}^{(t)}\|_2^2 .
\end{equation*}
Telescoping and $F\ge F^{*}$ (attained on the compact $\mathcal{W}$)
yield the bound.
\end{proof}

For GTV-KL, the Monte-Carlo
surrogate \eqref{eq:kl_mc} with frozen neighbors is unbiased and smooth
on $\mathcal{W}$, yielding descent of each round's surrogate; a
cross-round guarantee, with fresh samples, is left open. For GTV-Eucl, the per-round re-matching renders the objective
piecewise smooth; a majorize--minimize extension is left to an
extended version. For $T_{\mathrm{loc}}>1$, later local steps use
stale neighbor parameters; the $T_{\mathrm{loc}}=5$ used in the
experiments is a communication-saving heuristic, also open.

\section{Experiments}
\label{sec:experiments}

\textbf{Setup.}
Unless stated otherwise, we use $\nrnodes=10$ nodes on
Erd\H{o}s--R\'enyi (ER) graphs with edge
probability $p=0.6$ (plus a ring backbone so no node is isolated),
$T=40$ rounds,
$T_{\mathrm{loc}}=5$ full-batch gradient steps, learning rate
$\lrate=0.02$, $\localsamplesize{\mathrm{nbr}}=50$, MMD bandwidths
$\sigma\in\{0.5,1,2,4\}$, dimension
$\featuredim=2$, and $\nrcluster=2$ components for synthetic data. Every experiment is repeated over $20$
seeds; we report medians and $95\%$ percentile-bootstrap confidence
intervals (CIs) over seeds. Baselines: Local
(independent GMM per node) and Centralized (one GMM on the pooled data),
both via scikit-learn~\cite{pedregosa2011sklearn}
(\texttt{covariance\_type=full}, \texttt{n\_init=5}), and FedKMeans
(synchronous federated $k$-means with Hungarian-aligned center
averaging), a hard-clustering baseline isolating whether GTVMin's
distributional coupling improves on simple federated centroid
averaging on the likelihood-free metrics ($\err$, NMI). Metrics: validation log-likelihood (LL), normalized mutual
information (NMI), and centroid estimation error
$\err=\tfrac{1}{\nrnodes\nrcluster}\sum_{\nodeidx,\clusteridx}
\|\meanvec{\clusteridx}-\meanvec{\nodeidx,\clusteridx}\|_2$ after Hungarian
alignment, where $\meanvec{\clusteridx}$ denotes the true cluster
centroids. $\regparam$ is chosen per algorithm by grid search over
$\{10^{-1},\ldots,10^{2}\}$ on homogeneous-data validation LL ($10^0$,
$10^{-1}$, $10^0$ for GTV-Eucl/KL/MMD); in practice, $\regparam$ can
instead be chosen by cross-validation on local data. Code reproducing
all results accompanies the paper.

\begin{table}[t]
	\centering
	\caption{Synthetic datasets ($\localsamplesize{\nodeidx}=100$,
	$\featuredim=2$, $\nrcluster=2$): median [95\% bootstrap CI] over 20
	seeds. Best per column and dataset (incl.\ ties) in bold.}
	\label{tab:syn_data}
	\footnotesize
	\setlength{\tabcolsep}{4pt}
	\begin{tabular}{lcc}
		\toprule
		Algorithm & LL & $\err$ \\
		\midrule
		\multicolumn{3}{l}{\emph{Dataset} $\mathcal{D}_{\text{iso}}$}\\
		GTV-Eucl (ours) & $\mathbf{-3.52}$ $[-3.53,-3.50]$ & $0.06$ $[0.04,0.07]$ \\
		GTV-KL (ours) & $-3.54$ $[-3.55,-3.52]$ & $0.12$ $[0.11,0.14]$ \\
		GTV-MMD (ours) & $-3.54$ $[-3.55,-3.51]$ & $0.08$ $[0.07,0.10]$ \\
		Local & $-3.57$ $[-3.59,-3.54]$ & $0.18$ $[0.17,0.20]$ \\
		Centralized & $\mathbf{-3.52}$ $[-3.53,-3.50]$ & $\mathbf{0.05}$ $[0.04,0.07]$ \\
		FedKMeans & --- & $0.06$ $[0.04,0.08]$ \\
		\midrule
		\multicolumn{3}{l}{\emph{Dataset} $\mathcal{D}_{\text{var}}$}\\
		GTV-Eucl (ours) & $\mathbf{-3.52}$ $[-3.54,-3.49]$ & $0.08$ $[0.05,0.11]$ \\
		GTV-KL (ours) & $-3.54$ $[-3.56,-3.51]$ & $0.14$ $[0.13,0.16]$ \\
		GTV-MMD (ours) & $-3.53$ $[-3.56,-3.51]$ & $0.10$ $[0.08,0.12]$ \\
		Local & $-3.58$ $[-3.59,-3.53]$ & $0.19$ $[0.18,0.20]$ \\
		Centralized & $\mathbf{-3.52}$ $[-3.54,-3.49]$ & $\mathbf{0.06}$ $[0.05,0.08]$ \\
		FedKMeans & --- & $0.08$ $[0.05,0.13]$ \\
		\midrule
		\multicolumn{3}{l}{\emph{Dataset} $\mathcal{D}_{\text{aniso}}$}\\
		GTV-Eucl (ours) & $\mathbf{-3.26}$ $[-3.28,-3.24]$ & $0.06$ $[0.04,0.08]$ \\
		GTV-KL (ours) & $-3.28$ $[-3.30,-3.26]$ & $0.13$ $[0.12,0.16]$ \\
		GTV-MMD (ours) & $-3.29$ $[-3.30,-3.26]$ & $0.08$ $[0.07,0.10]$ \\
		Local & $-3.32$ $[-3.33,-3.29]$ & $0.19$ $[0.17,0.20]$ \\
		Centralized & $\mathbf{-3.26}$ $[-3.28,-3.24]$ & $\mathbf{0.05}$ $[0.04,0.06]$ \\
		FedKMeans & --- & $0.06$ $[0.04,0.08]$ \\
		\bottomrule
	\end{tabular}
\end{table}

\textbf{Homogeneous synthetic data.}
Local datasets are drawn from a shared $\nrcluster$-component GMM (means
$\sim\mathcal{N}(\mathbf{0},25 I)$) with isotropic
($\mathcal{D}_{\mathrm{iso}}$), varying-scale
($\mathcal{D}_{\mathrm{var}}$), and anisotropic
($\mathcal{D}_{\mathrm{aniso}}$) covariance structures. Each node holds
$\localsamplesize{\nodeidx}=100$ training and $400$ validation
data points. Table~\ref{tab:syn_data}: every GTV variant improves
over Local on validation LL and $\err$ on all three datasets.
GTV-Eucl tracks Centralized within $0.001$ nats in LL and $0.014$ in
$\err$: consensus averaging matches identical local distributions.
The distributional couplings lose $\approx0.02$ nats more. FedKMeans
is competitive on $\err$ but has no likelihood model. Median NMI is $1.00$ for
every method, so Table~\ref{tab:syn_data} omits it.
\emph{Takeaway: on homogeneous data, consensus-style GTV-Eucl is
essentially as good as pooling all data.}

\textbf{Heterogeneity: clustered FL networks.}
We model heterogeneity with a three-block SBM: $\nrnodes=15$ nodes split into three blocks of five; edges
appear with probability $p_{\mathrm{in}}=0.8$ within a block and
$p_{\mathrm{out}}$ across any pair of blocks, plus a chain within
each block for connectivity. The cluster means of
blocks 2 and 3 are each shifted by an independently drawn
$\nu_b\sim\mathcal{N}(\mathbf{0},25I)$, so every block is internally
homogeneous ($\mathcal{D}_{\mathrm{iso}}$). We set $\nrcluster=5$ and
reduce the local sample size to $\localsamplesize{\nodeidx}=50$ (10
points per component): unlike the ample-data first experiment, this
probes the scarce-data regime where pooling matters (cf.\
Table~\ref{tab:ext_base}), re-selecting $\regparam$ by the grid
protocol. Fig.~\ref{fig:sbm}
shows the sweep over $p_{\mathrm{out}}$. When the FL network matches
the cluster structure ($p_{\mathrm{out}}=0$), every GTVMin variant
realizes block-wise pooling: GTV-KL attains a median LL of $-4.38$
(train $-4.18$), ahead of overfitting Local training ($-5.15$; train
$-3.9$) and Centralized ($-5.44$, which merges the blocks). As $p_{\mathrm{out}}$ grows the
couplings separate. GTV-Eucl collapses abruptly: its unbounded
quadratic penalty pulls equally hard regardless of neighbor distance,
so $\err$ jumps from $0.46$ at $p_{\mathrm{out}}=0$ to $3.98$ already
at $p_{\mathrm{out}}=0.1$.
GTV-KL degrades gradually, falling behind Local between
$p_{\mathrm{out}}=0.1$ ($\err=1.14$) and $0.2$ ($\err=1.51$). GTV-MMD
degrades most gracefully and stays best across the sweep, even at
$p_{\mathrm{out}}=0.8$ where the graph carries no cluster information ($\mathrm{LL}=-4.95$, $\err=0.85$): the bounded RBF kernel in \eqref{eq:mmd} yields
vanishing gradients on edges connecting distant models, suppressing
boundary-crossing edges automatically. NMI: GTV-Eucl falls to
$0.73$; GTV-KL/-MMD hold $0.84$--$0.88$.
Fig.~\ref{fig:sbm_rounds} exposes the mechanism at
$p_{\mathrm{out}}=0.2$ over rounds: cross-block edges pull GTV-KL and
GTV-Eucl up, while GTV-MMD holds $\err\approx0.6$.
\emph{Takeaway: GTVMin pools statistical strength within blocks; the
bounded MMD kernel makes GTV-MMD robust to spurious cross-block
edges; the unbounded couplings are not.}

\begin{figure}[t]
	\centering
	\includegraphics[width=0.96\linewidth]{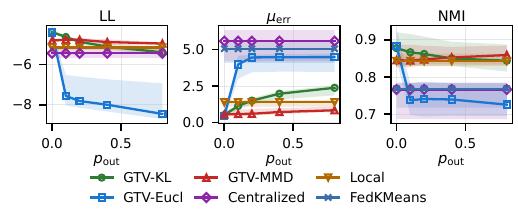}
	\caption{Three-block SBM with block-homogeneous data: LL (left),
	$\err$ (middle), and NMI (right) vs.\ the inter-block edge
	probability $p_{\mathrm{out}}$; $\mathcal{D}_{\mathrm{iso}}$,
	$\nrcluster=5$, $\localsamplesize{\nodeidx}=50$. Median and 95\%
	bootstrap CI over 20 seeds.}
	\label{fig:sbm}
\end{figure}

\begin{figure}[t]
	\centering
	\includegraphics[width=0.92\linewidth]{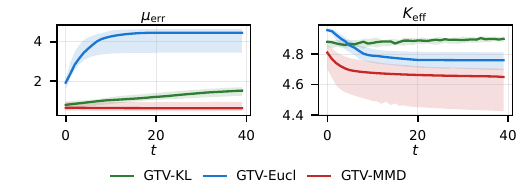}
	\caption{Centroid error (left) and effective component count
	$K_{\mathrm{eff}}^{(\nodeidx)}
	=\exp(-\sum_{\clusteridx}\pi^{(\nodeidx,\clusteridx)}
	\log\pi^{(\nodeidx,\clusteridx)})$ (right) over communication rounds
	at $p_{\mathrm{out}}=0.2$; $K_{\mathrm{eff}}$ stays close to
	$\nrcluster$. Median and 95\% CI over 20 seeds.}
	\label{fig:sbm_rounds}
\end{figure}

\textbf{Comparison with prior federated GMM methods.}
Table~\ref{tab:ext_base} compares GTV-KL/-MMD with five published
methods (our
implementations)~\cite{Gu2008,marfoq2021mixture,pettersson2025federated,tian2023towards,ghosh2021}
on the same data, protocol, and round budget as
Algorithm~\ref{alg:gtvmin}; FedGrEM's hyperparameters are jointly
tuned by the same oracle grid as $\regparam$, and we extend its
isotropic-only reference implementation to full covariance, matching
our setting; IFCA is graph-agnostic and receives the true SBM 
block count as an oracle input, unlike the other methods, including
GTV-KL/-MMD; GTV-Eucl is omitted
(Fig.~\ref{fig:sbm} shows its collapse). On a homogeneous ER network,
FedGrEM/FedEM/FedGenGMM and (trivially, given its oracle) IFCA reduce
to (near-)global models matching Centralized, ahead of Local; the
graph-aware consensus EM~\cite{Gu2008} instead trails Centralized by
$0.45$ nats here, since our $T=40$-round budget is too short for its
single-exchange-per-round filter to converge even within one
connected block. On the SBM ($p_{\mathrm{out}}=0.2$), every
method without oracle cluster knowledge merges the blocks
($\err\geq4.3$) --- including the personalized
FedGrEM~\cite{tian2023towards}, whose common-center shrinkage fails
here, and consensus EM~\cite{Gu2008}, whose filter diffuses toward a
single network-wide average regardless of block structure --- while
GTV-MMD attains the best LL and $\err$ among these and GTV-KL the
second-best LL. IFCA, handed the true block count, separates the
blocks almost as well (LL $-4.43$, $\err=0.98$) as GTV-MMD, but only
as an oracle bound.
\emph{Takeaway: without knowing the true number of clusters,
global-model and common-center baselines merge the blocks;
graph-coupled personalization does not.}

\begin{table}[t]
	\centering
	\caption{Prior federated GMM methods vs.\ GTV-KL/-MMD: homogeneous
	ER network and three-block SBM ($p_{\mathrm{out}}=0.2$),
	$\nrcluster=5$, $\localsamplesize{\nodeidx}=50$; median [95\% CI]
	over 20 seeds. Best per column (incl.\ ties) in bold. IFCA is given
	the true number of blocks as an oracle input.}
	\label{tab:ext_base}
	\scriptsize
	\renewcommand{\arraystretch}{0.94}
	\setlength{\tabcolsep}{0.9pt}
	\begin{tabular}{lccc}
		\toprule
		Algorithm & LL (homog.) & LL (SBM) & $\err$ (SBM) \\
		\midrule
		GTV-KL (ours) & $-4.36$ {\tiny$[-4.42,-4.24]$} & $-4.87$ {\tiny$[-4.98,-4.79]$} & $1.50$ {\tiny$[1.24,1.68]$} \\
		GTV-MMD (ours) & $-4.57$ {\tiny$[-4.67,-4.47]$} & $-4.77$ {\tiny$[-4.82,-4.69]$} & $\mathbf{0.60}$ {\tiny$[0.48,0.93]$} \\
		Local & $-5.09$ {\tiny$[-5.17,-4.94]$} & $-5.15$ {\tiny$[-5.28,-5.07]$} & $1.39$ {\tiny$[1.10,1.56]$} \\
		FedGrEM~\cite{tian2023towards} & $-4.36$ {\tiny$[-4.44,-4.25]$} & $-5.45$ {\tiny$[-5.60,-5.26]$} & $5.15$ {\tiny$[4.28,5.49]$} \\
		Cons.\ EM~\cite{Gu2008} & $-4.77$ {\tiny$[-4.87,-4.59]$} & $-5.62$ {\tiny$[-5.79,-5.36]$} & $4.34$ {\tiny$[3.75,5.05]$} \\
		FedEM~\cite{marfoq2021mixture} & $-4.36$ {\tiny$[-4.43,-4.25]$} & $-5.13$ {\tiny$[-5.34,-4.87]$} & $5.28$ {\tiny$[4.39,5.61]$} \\
		FedGenGMM~\cite{pettersson2025federated} & $-4.34$ {\tiny$[-4.42,-4.23]$} & $-5.45$ {\tiny$[-5.73,-5.19]$} & $5.49$ {\tiny$[4.18,5.98]$} \\
		IFCA~\cite{ghosh2021} & $\mathbf{-4.32}$ {\tiny$[-4.39,-4.21]$} & $\mathbf{-4.43}$ {\tiny$[-4.46,-4.30]$} & $0.98$ {\tiny$[0.34,1.39]$} \\
		\bottomrule
	\end{tabular}
\end{table}

\textbf{Handwritten digits.}
We embed the $1797$ images ($8\times8$ pixels) of the scikit-learn
handwritten digits dataset~\cite{pedregosa2011sklearn} into $\featuredim=10$
dimensions by PCA, rescaled to unit overall standard deviation, and
set $\nrcluster=10$. We distribute the data over $\nrnodes=10$ nodes by a
Dirichlet label split ($\alpha=0.5$: strong skew; $\alpha=100$:
near-i.i.d.), holding out $40\%$ per node for validation (ER graph,
$p=0.6$, $T=25$, $\lrate=0.01$; $\regparam$ from the synthetic grid
protocol).
Fig.~\ref{fig:digits} shows LL and NMI over $\alpha$: with only
$\approx110$ training points per node and $\nrcluster=10$
full-covariance components, Local overfits badly (median LL $-32$ at
$\mathrm{Dir}(\alpha)=1$, train LL $-3.5$); every GTV variant lifts
the LL by $\approx20$ nats (GTV-KL: $-12.4$, train $-8.7$);
Centralized retains the best LL/NMI ($-9.9$, $0.83$--$0.84$).
GTV-KL attains the best federated LL under skew ($-12.1$,
$\alpha=0.5$); GTV-Eucl nears Centralized toward i.i.d.\ ($-12.0$,
$\alpha=100$). On NMI, parameter-space methods (GTV-Eucl $0.79$,
FedKMeans $0.80$) beat distributional couplings ($0.61$--$0.66$).
\emph{Takeaway: under label skew, distributional couplings win on
likelihood, parameter-space couplings on NMI.}

\begin{figure}[t]
	\centering
	\includegraphics[width=0.92\linewidth]{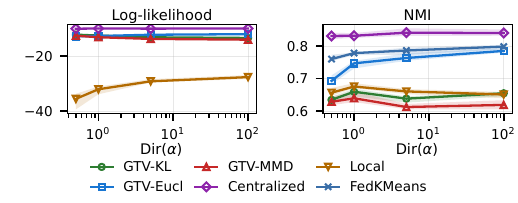}
	\caption{Handwritten digits: validation LL and NMI vs.\ Dirichlet
	concentration $\mathrm{Dir}(\alpha)$ of the label split. Median and
	95\% bootstrap CI over 20 seeds; FedKMeans has no likelihood model (NMI
	only).}
	\label{fig:digits}
\end{figure}

\section{Conclusion}
\label{sec:conclusion}

We cast federated soft clustering as GTVMin over an FL network of
personalized GMMs; the GTV discrepancy is the central design decision,
with a stationarity guarantee for MMD
(Proposition~\ref{prop:convergence}); the cheap Euclidean coupling
(Table~\ref{tab:gtv_compare}) needs matching and, under
heterogeneity, drags models toward a poor average. The matching-free distributional couplings realize
block-wise pooling on clustered FL networks, beating all baselines;
the bounded MMD kernel loses the least log-likelihood across cluster
boundaries.

\section{Compliance with Ethical Standards}
This work uses only synthetic and public benchmark data (no human
subjects). Computational resources: Aalto Science-IT. Text and code
were developed with the assistance of a large language model
(Anthropic Claude); the authors verified all methods and results.

\bibliographystyle{IEEEbib}
\bibliography{local_refs}

\end{document}